
\documentclass{article}

\usepackage{microtype}
\usepackage{graphicx}
\usepackage{subfigure}
\usepackage{booktabs} 

\usepackage{hyperref}



\usepackage[accepted]{icml2019}


\usepackage{amsfonts}
\usepackage{amsmath}
\usepackage{amssymb}
\usepackage{amsthm}
\usepackage[algo2e,noend]{algorithm2e}

\mathchardef\mhyphen="2D 
\DeclareMathOperator*{\Val}{Val}
\newcommand{\Tau}{\mathcal{T}}
\newcommand{\Taub}{\bs{\Tau}}
\newcommand{\bs}{\boldsymbol}
\newcommand\independent{\protect\mathpalette{\protect\independenT}{\perp}}
\def\independenT#1#2{\mathrel{\rlap{$#1#2$}\mkern2mu{#1#2}}}
\newcommand{\ent}{\mathcal{H}}
\newcommand{\rep}{\mathcal{R}}

\newcommand{\lb}{\mathcal{L}}
\newcommand{\taub}{\bs{\tau}}
\newtheorem{theorem}{Theorem}
\newcommand\numberthis{\addtocounter{equation}{1}\tag{\theequation}}

\newcommand\blfootnote[1]{%
  \begingroup
  \renewcommand\thefootnote{}\footnote{#1}%
  \addtocounter{footnote}{-1}%
  \endgroup
}

\icmltitlerunning{Maximum Entropy-Regularized Multi-Goal Reinforcement Learning}

\begin{document}

\twocolumn[
\icmltitle{Maximum Entropy-Regularized Multi-Goal Reinforcement Learning}




\begin{icmlauthorlist}
\icmlauthor{Rui Zhao}{LMU,Siemens}
\icmlauthor{Xudong Sun}{LMU}
\icmlauthor{Volker Tresp}{LMU,Siemens}
\end{icmlauthorlist}

\icmlaffiliation{LMU}{Faculty of Mathematics, Informatics and Statistics, Ludwig Maximilian University of Munich, Munich, Bavaria, Germany}
\icmlaffiliation{Siemens}{Siemens AG, Munich, Bavaria, Germany}

\icmlcorrespondingauthor{Rui Zhao}{zhaorui.in.germany@gmail.com} 

\icmlkeywords{Deep RL, Maximum Entropy, Goal-Conditioned Policy, Hindsight Experience}

\vskip 0.3in
]



\printAffiliationsAndNotice{}  

\begin{abstract}

In Multi-Goal Reinforcement Learning, an agent learns to achieve multiple goals with a goal-conditioned policy. During learning, the agent first collects the trajectories into a replay buffer, and later these trajectories are selected randomly for replay. However, the achieved goals in the replay buffer are often biased towards the behavior policies. From a Bayesian perspective, when there is no prior knowledge about the target goal distribution, the agent should learn uniformly from diverse achieved goals. Therefore, we first propose a novel multi-goal RL objective based on weighted entropy. This objective encourages the agent to maximize the expected return, as well as to achieve more diverse goals.  Secondly, we developed a maximum entropy-based prioritization framework to optimize the proposed objective. For evaluation of this framework, we combine it with Deep Deterministic Policy Gradient, both with or without Hindsight Experience Replay. On a set of multi-goal robotic tasks of OpenAI Gym, we compare our method with other baselines and show promising improvements in both performance and sample-efficiency. 

\end{abstract}

\section{Introduction}

Reinforcement Learning (RL) \cite{sutton1998reinforcement} combined with Deep Learning (DL) \cite{goodfellow2016deep,zhao2017two} has led to great successes in various tasks, such as playing video games \cite{mnih2015human}, challenging the World Go Champion \cite{silver2016mastering}, conducting goal-oriented dialogues \cite{bordes2016learning, zhao2018improving, zhao2018learning, zhao2018efficient}, and learning autonomously to accomplish different  robotic tasks \cite{ng2006autonomous, peters2008reinforcement, levine2016end, chebotar2017path, andrychowicz2017hindsight}.

One of the biggest challenges in RL is to make the agent learn efficiently in applications with sparse rewards.
To tackle this challenge, \citet{lillicrap2015continuous} developed the Deep Deterministic Policy Gradient (DDPG), which enables the agent to learn continuous control, such as manipulation and locomotion. \citet{schaul2015universal} proposed Universal Value Function Approximators (UVFAs), which generalize not just over states, but also over goals, and extend value functions to multiple goals. Furthermore, to make the agent learn faster in sparse reward settings, \citet{andrychowicz2017hindsight} introduced Hindsight Experience Replay (HER), which encourages the agent to learn from the goal-states it has achieved. The combined use of DDPG and HER allows the agent to learn to accomplish more complex robot manipulation tasks. However, there is still a huge gap between the learning efficiency of humans and RL agents.  
In most cases, an RL agent needs millions of samples before it is able to solve the tasks, while humans only need a few samples \cite{mnih2015human}. \blfootnote{This paper is based on our 2018 NeurIPS Deep RL workshop paper \cite{zhao2019curiosity}.}

\begin{figure*}
	\centering
	\includegraphics[width=5.5 in]{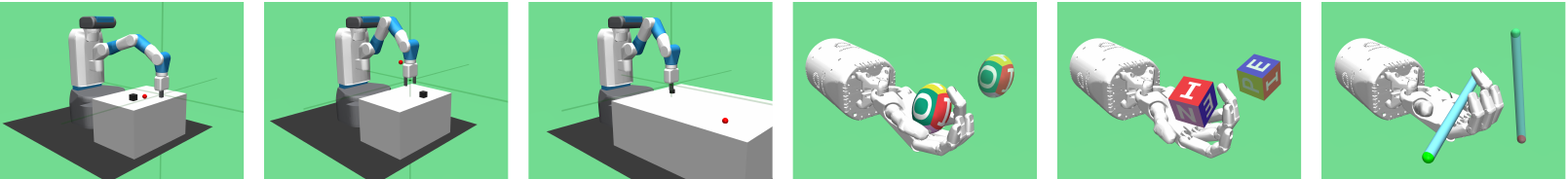}
	\caption{Robot arm Fetch and Shadow Dexterous hand environment: 
	\texttt{FetchPush}, \texttt{FetchPickAndPlace}, \texttt{FetchSlide},
	\texttt{HandManipulateEgg}, \texttt{HandManipulateBlock}, and \texttt{HandManipulatePen}.}
	\label{fig:fetchhand6env}
\end{figure*}
In previous works, the concept of maximum entropy has been used to encourage exploration during training \cite{williams1991function,mnih2015human,wu2016training}.
Recently,  \citet{haarnoja2017reinforcement} introduced Soft-Q Learning, which learns a deep energy-based policy by evaluating the maximum entropy of actions for each state. Soft-Q Learning encourages the agent to learn all the policies that lead to the optimum \cite{levine2018reinforcement}.  Furthermore, Soft Actor-Critic \cite{haarnoja2018soft} demonstrated a better performance while showing compositional ability and robustness of the maximum entropy policy in locomotion \cite{haarnoja2018latent} and robot manipulation tasks \cite{haarnoja2018composable}. The agent aims to maximize the expected reward while also maximizing the entropy to succeed at the task while acting as randomly as possible. Based on maximum entropy policies, \citet{eysenbach2018diversity} showed that the agent is able to develop diverse skills solely by maximizing an information theoretic objective without any reward function. For multi-goal and multi-task learning \cite{caruana1997multitask}, the diversity of training sets helps the agent transfer skills to unseen goals and tasks \cite{pan2010survey}.  The variability of training samples mitigates overfitting and helps the model to better generalize \cite{goodfellow2016deep}.  In our approach, we combine maximum entropy with multi-goal RL to help the agent to achieve unseen goals by learning uniformly from diverse achieved goals during training.  

We observe that during experience replay the uniformly sampled trajectories are biased towards the behavior policies, with respect to the achieved goal-states. 
Consider training a robot arm to reach a certain point in a space. At the beginning, the agent samples trajectories using a random policy. The sampled trajectories are centered around the initial position of the robot arm. Therefore, the distribution of achieved goals, i.e., positions of the robot arm, is similar to a Gaussian distribution around the initial position, which is non-uniform. Sampling from such a distribution is biased towards the current policies. From a Bayesian point of view \cite{murphy2012machine}, the agent should learn uniformly from these achieved goals, when there is no prior knowledge of the target goal distribution. 

To correct this bias, we propose a new objective which combines maximum entropy and the multi-goal RL objective. This new objective uses entropy as a regularizer to encourage the agent to traverse diverse goal-states. Furthermore, we derive a safe lower bound for optimization. To optimize this surrogate objective, we implement maximum entropy-based prioritization as a simple yet effective solution.


\section{Preliminary}
\label{sec:background}  
\subsection{Settings}
\label{sec:environments}
\textbf{Environments:} 
We consider multi-goal reinforcement learning tasks, like the robotic simulation scenarios provided by OpenAI Gym \cite{plappert2018multi}, where six challenging tasks are used for evaluation, including push, slide, pick \& place with the robot arm, as well as hand manipulation of the block, egg, and pen, as shown in Figure~\ref{fig:fetchhand6env}.
Accordingly, we define the following terminologies for this specific kind of multi-goal scenarios. 

\textbf{Goals:} 
The goals $g$ are the desired positions and the orientations of the object. 
Specifically, we use $g^e$, with $e$ standing for environment, to denote the real goal which serves as the input from the environment, in order to distinguish it from the achieved goal used in Hindsight settings \cite{andrychowicz2017hindsight}.  
Note that in this paper we consider the case where the goals can be represented by states, which leads us to the concept of achieved goal-state $g^s$, with details explained below. 

\textbf{States, Goal-States and Achieved Goals:} 
The state $s$ consists of two sub-vectors, the achieved goal-state $s^g$, which represents the position and orientation of the object being manipulated, and the context state $s^c$, i.e.\ $s = (s^g \| s^c)$, where $\|$ denotes concatenation.

In our case, we define $g^s=s^g$ to represent an achieved goal that has the same dimension as the real goal $g^e$ from the environment. 
The context state $s^c$ contains the rest information about the state, including the linear and angular velocities of all robot joints and of the object. The real goals $g^e$ can be substituted by the achieved goals $g^s$ to facilitate learning.
This goal relabeling technique was proposed by \citet{andrychowicz2017hindsight} as Hindsight Experience Replay.

\textbf{Achieved Goal Trajectory:} 
A trajectory consisting solely of goal-states is represented as $\bs{\tau}^g$.
We use $\taub^g$ to denote all the achieved goals in the trajectory $\bs{\tau}$, i.e., $\taub^g = (g^{s}_0, ... , g^{s}_T)$.

\textbf{Rewards:} We consider sparse rewards $r$. There is a tolerated range between the desired goal-states and the achieved goal-states.
If the object is not in the tolerated range of the real goal, the agent receives a reward signal -$1$ for each transition; otherwise, the agent receives a reward signal $0$.

\textbf{Goal-Conditioned Policy:} 
In multi-goal settings, the agent receives the environmental goal $g^e$ and the state input $s = (s^g \| s^c)$. 
We want to train a goal-conditioned policy to effectively generalize its behavior to different environmental goals $g^e$.

\subsection{Reinforcement Learning}
\label{sec:rl}

We consider an agent interacting with an environment. We assume the environment is fully observable, including a set of state $\mathcal{S}$, a set of action $\mathcal{A}$, a distribution of initial states $p(s_0)$, transition probabilities $p(s_{t+1} \mid s_t, a_t)$, a reward function $r$: $\mathcal{S} \times \mathcal{A} \rightarrow \mathbb{R}$, and a discount factor $\gamma \in [0,1]$. 

\textbf{Deep Deterministic Policy Gradient:}
For continuous control tasks, the Deep Deterministic Policy Gradient (DDPG) shows promising performance, which is essentially an off-policy actor-critic method \cite{lillicrap2015continuous}. 

\textbf{Universal Value Function Approximators:}
For multi-goal continuous control tasks, DDPG can be extended by Universal Value Function Approximators (UVFA) \cite{schaul2015universal}. UVFA essentially generalizes the Q-function to multiple goal-states, where the Q-value depends not only on the state-action pairs, but also on the goals.

\textbf{Hindsight Experience Replay:}
For robotic tasks, if the goal is challenging and the reward is sparse, the agent could perform badly for a long time before learning anything. 
Hindsight Experience Replay (HER) encourages the agent to learn from whatever goal-states it has achieved. 
\citet{andrychowicz2017hindsight} show that HER makes training possible in challenging robotic tasks via goal relabeling, i.e., randomly substituting real goals with achieved goals.

\subsection{Weighted Entropy} 
\citet{guiacsu1971weighted} proposed weighted entropy, which is an extension of Shannon entropy. The definition of weighted entropy is given as
\begin{equation}
\ent^w_p = -\sum_{k=1}^K w_kp_k \log p_k,
\end{equation}
where $w_k$ is the weight of the elementary event and $p_k$ is the probability of the elementary event.
\section{Method}
\label{sec:method}

In this section, we formally describe our method, including the mathematical derivation of the Maximum Entropy-Regularized Multi-Goal RL objective and the Maximum Entropy-based Prioritization framework. 

\subsection{Multi-Goal RL}
\label{sec:sub:multi-goal-rl}
In this paper, we consider multi-goal RL as goal-conditioned policy learning \cite{schaul2015universal,andrychowicz2017hindsight,rauber2017hindsight,plappert2018multi}.
We denote random variables by upper case letters and the values of random variables by corresponding lower case letters. For example, let $\Val(X)$ denote the set of valid values to a random variable $X$, and let $p(x)$ denote the probability function of random variable $X$.

Consider that an agent receives a goal $g^e \in \Val(G^e)$ at the beginning of the episode. The agent interacts with the environment for $T$ timesteps. 
At each timestep $t$, the agent observes a state $s_t \in \Val(S_t)$ and performs an action $a_t \in \Val(A_t)$.
The agent also receives a reward conditioned on the input goal $r(s_t, g^e) \in \mathbb{R}$.

We use $\bs{\tau} = s_1, a_1, s_2, a_2, \ldots, s_{T-1}, a_{T-1}, s_{T}$ to denote a trajectory, where $\taub \in 
\Val(\Taub)$. 
We assume that the probability $p(\bs{\tau} \mid g^e, \bs{\theta})$ of trajectory $\bs{\tau}$, given goal $g^e$ and a policy parameterized by $\bs{\theta} \in \Val(\bs{\Theta})$, is given as
\begin{equation*}
 p(\bs{\tau} \mid g^e, \bs{\theta}) = p(s_1) \prod_{t=1}^{T-1} p(a_t \mid s_t, g^e, \bs{\theta}) p(s_{t+1} \mid s_t, a_t).
\label{eq:trajectory}
\end{equation*}

The transition probability $p(s_{t+1} \mid s_t, a_t)$ states that the probability of a state transition given an action is independent of the goal, and we denote it with $S_{t+1} \independent G^e \mid S_{t}, A_{t}$. For every $\bs{\tau}, g^e,$ and $\bs{\theta}$, we also assume that $p(\bs{\tau} \mid g^e, \bs{\theta})$ is non-zero. The expected return of a policy parameterized by $\bs{\theta}$ is given as
\begin{equation}
\begin{split}
\eta(\bs{\theta}) &=\mathbb{E}\left[ \sum_{t = 1}^T r(S_t, G^e) \mid \bs{\theta} \right]  \\
							&=  \sum_{g^e} p(g^e) \sum_{\bs{\tau}} p(\bs{\tau} \mid g^e, \bs{\theta}) \sum_{t = 1}^T r(s_t, g^e).
\end{split}
\label{eq:objective}
\end{equation}
Off-policy RL methods use experience replay \cite{lin1992self,mnih2015human} to leverage bias over variance and potentially improve sample-efficiency.
In the off-policy case, the objective, Equation~(\ref{eq:objective}),  is given as
\begin{equation}
\begin{split}
\eta^{\rep}(\bs{\theta}) 	
							&=  \sum_{\bs{\tau},\ g^e} p_{\rep}(\bs{\tau},g^e \mid \bs{\theta}) \sum_{t = 1}^T r(s_t, g^e),
\end{split}
\label{eq:objective-offpolicy}
\end{equation}
where $\rep$ denotes the replay buffer. 
Normally, the trajectories $\bs{\tau}$ are randomly sampled from the buffer.
However, we observe that the trajectories in the replay buffer are often imbalanced with respect to the achieved goals $\taub^g$.
Thus, we propose Maximum Entropy-Regularized Multi-Goal RL to improve performance.

\subsection{Maximum Entropy-Regularized Multi-Goal RL}
\label{sec:sub:maxent-multi-goal-rl}
In multi-goal RL, we want to encourage the agent to traverse diverse goal-state trajectories, and at the same time, maximize the expected return. This is like maximizing the empowerment \cite{mohamed2015variational} of an agent attempting to achieve multiple goals.
We propose the reward-weighted entropy objective for multi-goal RL, which is given as
\begin{equation}\begin{split}
\eta^{\ent}(\bs{\theta}) &=\ent^w_p(\Taub^g) \\ &= \mathbb{E}_p\left[\log\frac{1}{p(\taub^g)} \sum_{t = 1}^T r(S_t, G^e) \mid
\bs{\theta}\right] 
\end{split}
\label{eq:entropy-objective-traj}.
\end{equation} 
For simplicity, we use $p(\taub^g)$ to represent $\sum_{g^e}p_{\rep}(\bs{\tau}^g, g^e \mid \bs{\theta})$, which is the occurrence probability of the goal-state trajectory $\taub^g$. The expectation is calculated based on $p(\taub^g)$ as well, so the proposed objective is the weighted entropy \cite{guiacsu1971weighted, Kelbert2017} of $\taub^g$, which we denote as $\ent^w_p(\Taub^g)$, where the weight $w$ is the accumulated reward $\sum_{t = 1}^T r(s_t, g^e)$ in our case. 

The objective function, Equation~(\ref{eq:entropy-objective-traj}), has two interpretations. 
The first interpretation is to maximize the weighted expected return, where the rare trajectories have larger weights. Note that when all trajectories occur uniformly, this weighting mechanism has no effect. The second interpretation is to maximize a reward-weighted entropy, where the more rewarded trajectories have higher weights. This objective encourages the agent to learn how to achieve diverse goal-states, as well as to maximize the expected return.

In Equation~(\ref{eq:entropy-objective-traj}), the weight, $\log \left(1/p(\taub^g)\right)$, is unbounded, which makes the training of the universal function approximator unstable. Therefore, we propose a safe surrogate objective, $\eta^{\lb}$, which is essentially a lower bound of the original objective.

\subsection{Surrogate Objective}
\label{sec:sub:surrogate}

To construct the safe surrogate objective, we sample the trajectories from the replay buffer with a proposal distribution, $q(\taub^g) = \frac{1}{Z}p(\taub^g)\left(1-p(\taub^g)\right)$. $p(\taub^g)$ represents the distribution of the goal trajectories in the replay buffer. The surrogate objective is given in Theorem~\ref{th:lower-bound}, which is proved to be a lower bound of the original objective, Equation~(\ref{eq:entropy-objective-traj}).

\begin{theorem}
	\label{th:lower-bound}
	The surrogate $\eta^{\lb}(\bs{\theta})$ is a lower bound of the objective function $\eta^{\ent}(\bs{\theta})$, i.e.,  $\eta^{\lb}(\bs{\theta}) < \eta^{\ent}(\bs{\theta})$, where

	\begin{equation}
	\begin{split}
	\eta^{\ent}(\bs{\theta})   &=\ent^w_p(\Taub^g) \\
											&= \mathbb{E}_p\left[\log\frac{1}{p(\taub^g)} \sum_{t = 1}^T r(S_t, G^e) \mid \bs{\theta}\right] 
	\end{split}
	\end{equation} 
	
	\begin{equation}
	\eta^{\lb}(\bs{\theta}) = Z \cdot \mathbb{E}_q \left[\sum_{t = 1}^T r(S_t, G^e) \mid \bs{\theta} \right]
	\label{eq:th1-lb}
	\end{equation}
	
	\begin{equation}
	q(\taub^g) = \frac{1}{Z}p(\taub^g)\left(1-p(\taub^g)\right)
	\end{equation}
	
	$Z$ is the normalization factor for $q(\taub^g)$.
	$\ent^w_p(\Taub^g)$ is the weighted entropy \cite{guiacsu1971weighted, Kelbert2017}, where the weight is the accumulated reward $\sum_{t = 1}^T r(S_t, G^e)$, in our case. 
	
\end{theorem}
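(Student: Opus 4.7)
The plan is to prove the bound by reducing it to the scalar tangent-line inequality $-\log x \geq 1-x$ on $x \in (0,1]$, applied pointwise to $x = p(\taub^g)$. The strategy is to rewrite both $\eta^{\ent}$ and $\eta^{\lb}$ as sums over achieved-goal trajectories $\taub^g$ weighted against the common base measure $p(\taub^g)$, and then to compare the two integrands term by term.

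First I would extract $\taub^g$ from the inner expectations on both sides. Writing $R(\taub^g) := \mathbb{E}_p[\sum_{t=1}^{T} r(S_t, G^e) \mid \taub^g, \bs\theta]$ for the expected return conditional on the achieved-goal trajectory, and noting that the weight $\log(1/p(\taub^g))$ depends only on $\taub^g$, one obtains
$$\eta^{\ent}(\bs\theta) = \sum_{\taub^g} p(\taub^g)\,\log\!\frac{1}{p(\taub^g)}\,R(\taub^g).$$
For $\eta^{\lb}$, substituting $q(\taub^g) = \tfrac{1}{Z}p(\taub^g)(1-p(\taub^g))$ and cancelling the outer factor $Z$ against the $1/Z$ inside $q$ gives
$$\eta^{\lb}(\bs\theta) = \sum_{\taub^g} p(\taub^g)\bigl(1-p(\taub^g)\bigr)\,R(\taub^g).$$

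Next, I would form the difference
$$\eta^{\ent}(\bs\theta) - \eta^{\lb}(\bs\theta) = \sum_{\taub^g} p(\taub^g)\,R(\taub^g)\Bigl[\log\!\frac{1}{p(\taub^g)} - \bigl(1 - p(\taub^g)\bigr)\Bigr]$$
and apply $-\log x \geq 1-x$ for $x \in (0,1]$, which is strict whenever $x < 1$ (the standard tangent-line bound for the concave function $\log$ at $x = 1$). This makes each bracketed factor non-negative, and strictly positive as long as some $p(\taub^g) < 1$, which is guaranteed once the buffer contains more than one distinct achieved-goal trajectory. Multiplying by the factor $p(\taub^g)\,R(\taub^g)$ and summing then yields $\eta^{\lb}(\bs\theta) < \eta^{\ent}(\bs\theta)$.

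The main obstacle is the sign of $R(\taub^g)$: multiplying the pointwise inequality by $p(\taub^g)\,R(\taub^g)$ only preserves the direction of the inequality when $R(\taub^g) \geq 0$, whereas the sparse rewards of Section~\ref{sec:environments} take values in $\{-1,0\}$. I would address this by adopting the convention that rewards have been shifted to be non-negative (an affine shift that does not change the argmax over $\bs\theta$), or equivalently by reading the theorem under the implicit assumption that the accumulated return is non-negative; either way the remainder of the proof is a one-line consequence of the scalar inequality and the two algebraic rewrites above.
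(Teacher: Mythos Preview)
Your proof is correct and follows essentially the same route as the paper: expand both quantities as sums over $\taub^g$, cancel the normalizer $Z$ against the $1/Z$ in $q$, and apply the scalar inequality $1-x < -\log x$ (the paper writes it as $\log x < x-1$) termwise. Your caveat about the sign of $R(\taub^g)$ is well taken---the paper's proof silently relies on the same nonnegativity assumption when passing from the pointwise inequality to the sum.
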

\begin{proof}
	See Appendix.
\end{proof}

\subsection{Prioritized Sampling}
\label{sec:sub:prio}
To optimize the surrogate objective, Equation~(\ref{eq:th1-lb}), we cast the optimization process into a prioritized sampling framework.  At each iteration, we first construct the proposal distribution $q(\taub^g)$, which has a higher entropy than $p(\taub^g)$. This ensures that the agent learns from a more diverse goal-state distribution. In Theorem \ref{th:higher-entropy}, we prove that the entropy with respect to $q(\taub^g)$ is higher than the entropy with respect to $p(\taub^g)$.

\begin{theorem}
\label{th:higher-entropy}
Let the probability density function of goals in the replay buffer be 
\begin{equation}
p(\taub^g),\text{where}\ p(\taub^g_i) \in (0,1)\ \text{and}\ \sum_{i=1}^{N} p(\taub^g_i)=1.
\end{equation}
Let the proposal probability density function be defined as 
\begin{equation}
q(\taub^g_i) = \frac{1}{Z} p(\taub^g_i) \left( 1 - p(\taub^g_i) \right ) ,\ \text{where}\ \sum_{i=1}^N q(\taub^g_i)=1.
\end{equation}
Then, the proposal goal distribution has an equal or higher entropy 
\begin{equation}
\ent_q({\Taub^g}) - \ent_p (\Taub^g) \geq 0.
\end{equation}
\end{theorem}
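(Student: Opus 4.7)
My approach is a majorization argument. Abbreviate $p_i = p(\taub^g_i)$, $q_i = q(\taub^g_i)$, and $M = \sum_j p_j^2 = 1 - Z$. I will show that when both $p$ and $q$ are arranged in decreasing order, the vector $p$ majorizes $q$ in the sense of Hardy--Littlewood--Polya; since the Shannon entropy is a symmetric concave (hence Schur-concave) function of the probability vector, this immediately yields $\ent_q(\Taub^g) \geq \ent_p(\Taub^g)$.

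\textbf{Key steps.} First, without loss of generality assume $p_1 \geq p_2 \geq \dots \geq p_N$. A one-line computation gives
\begin{equation*}
q_i - q_j = \frac{(p_i - p_j)(1 - p_i - p_j)}{Z},
\end{equation*}
and since the assumption $p_k \in (0,1)$ with $\sum_k p_k = 1$ forces $p_i + p_j \leq 1$ (with equality only in the degenerate $N=2$ situation, where $q$ is already uniform and the claim is trivial), the sorting is preserved: $q_1 \geq q_2 \geq \dots \geq q_N$. Second, using $q_i = p_i(1-p_i)/Z$ directly, one obtains the closed-form partial-sum identity
\begin{equation*}
\sum_{i=1}^{k}\bigl(p_i - q_i\bigr) \;=\; \frac{1}{Z}\sum_{i=1}^{k} p_i\bigl(p_i - M\bigr)
\end{equation*}
valid for every $k \in \{1,\dots,N\}$, so it suffices to show the Chebyshev-type bound $\sum_{i=1}^{k} p_i^2 \geq M \sum_{i=1}^{k} p_i$. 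Third, letting $J \sim p$, the right-hand side is $\mathbb{E}[p_J]$ while the left-hand side equals $\Pr[J \leq k]\,\mathbb{E}[p_J \mid J \leq k]$; under the decreasing sort, $\{J \leq k\}$ picks out exactly the top $k$ values of $p_J$, so $\mathbb{E}[p_J \mid J \leq k] \geq \mathbb{E}[p_J]$ follows because the complementary conditional mean cannot exceed $\mathbb{E}[p_J]$. Combining the three steps, $p$ majorizes $q$, and Schur-concavity of $\ent$ delivers the inequality, with equality iff $p$ is uniform.

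\textbf{Main obstacle.} The algebra is routine; the real work lies in picking the right framework. The key observation that unlocks the proof is that $q$ is obtained from $p$ by a redistribution of mass away from the heavier indices (those with $p_i > M$) toward the lighter ones (those with $p_i < M$), and this flow is precisely ordering-preserving, which is the content of majorization. A more direct attempt, such as interpolating $p_t = (1-t)p + tq$ and differentiating $\ent(p_t)$, stalls because the sign of $p_i - M$ varies with $i$, making the first-order derivative term indefinite; the majorization route bypasses the sign issue and additionally supplies the clean equality condition.
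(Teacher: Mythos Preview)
Your proof is correct and takes essentially the same majorization route as the paper: both verify that $p$ majorizes $q$ (identical order-preservation computation $q_i-q_j=\tfrac{1}{Z}(p_i-p_j)(1-p_i-p_j)$, then the partial-sum inequality) and invoke Schur-concavity of entropy via Karamata. Your partial-sum argument through the identity $p_i-q_i=\tfrac{1}{Z}p_i(p_i-M)$ and the conditional-mean bound $\mathbb{E}[p_J\mid J\le k]\ge\mathbb{E}[p_J]$ is tidier than the paper's longer algebraic reduction; the only slip is that in step three the right-hand side $M\sum_{i\le k}p_i$ equals $\Pr[J\le k]\,\mathbb{E}[p_J]$ rather than $\mathbb{E}[p_J]$ alone, but this is harmless after cancelling the common $\Pr[J\le k]$ factor.
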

\begin{proof}
See Appendix.
\end{proof}

\subsection{Estimation of Distribution}
\label{sec:sub:prob}
To optimize the surrogate objective with prioritized sampling, we need to know the probability distribution of a goal-state trajectory $p(\taub^g)$.
We use a Latent Variable Model (LVM) \cite{murphy2012machine} to model the underlying distribution of $p(\taub^g)$, since LVM is suitable for modeling complex distributions. 

\begin{figure*}
    \centering
    \begin{minipage}{0.45\linewidth}
        \includegraphics[width=\linewidth]{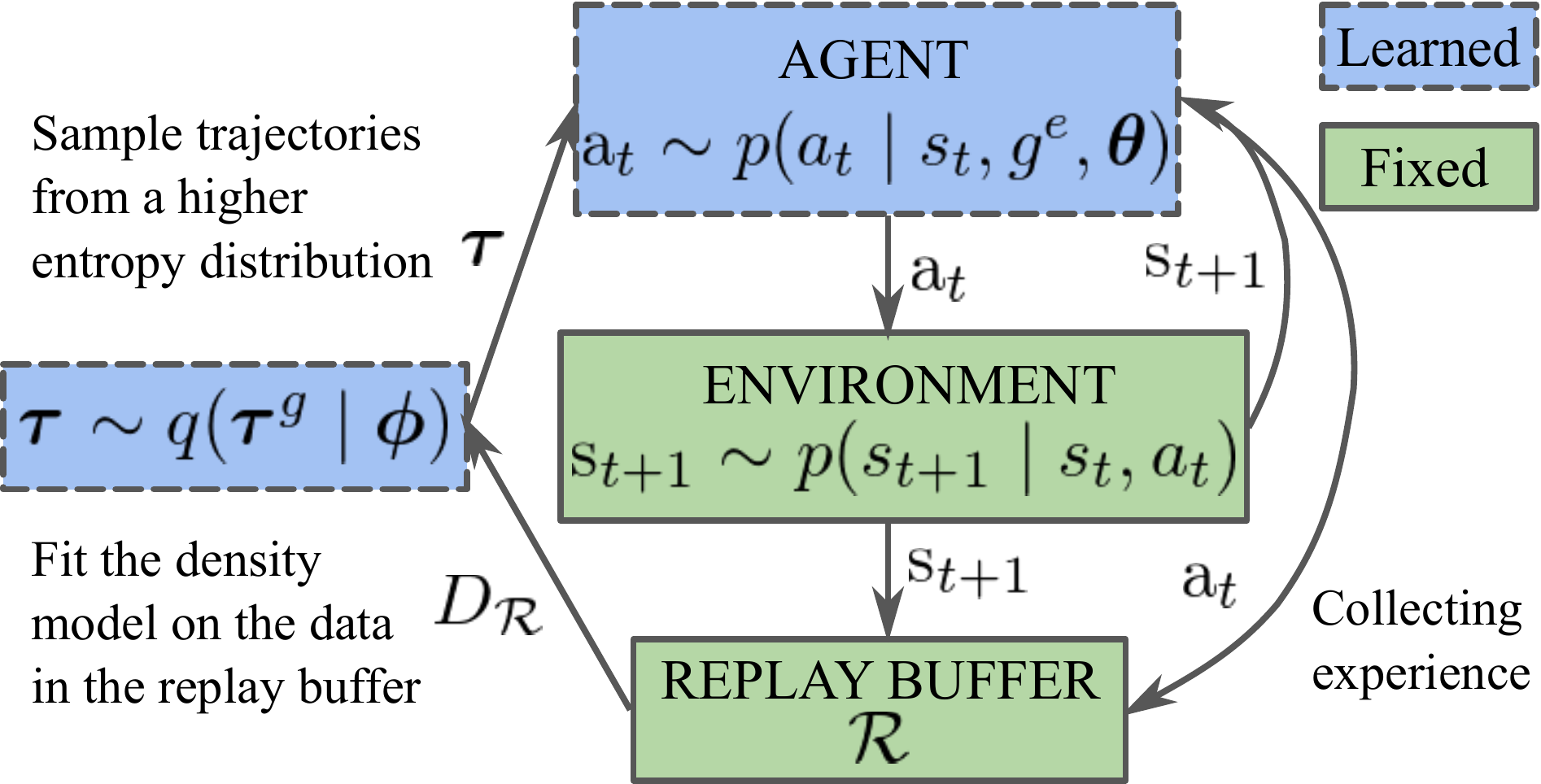}
    \end{minipage} \hfill
    \begin{minipage}{0.54\linewidth}
    
    \begin{algorithm}[H]
    \small
    \DontPrintSemicolon
    \SetAlgoLined
    \While{not converged}{
        Sample goal $g^e \sim p(g^e)$ and initial state $s_0 \sim p(s_0)$\\
        \For{$steps\_per\_epoch$}{
    		\For{$steps\_per\_episode$}{
            		Sample action $a_t \sim p(a_t \mid s_t, g^e, \bs{\theta})$ from behavior policy.\\
            		Step environment: $s_{t+1} \sim p(s_{t+1} \mid s_t, a_t)$.\\
            		Update replay buffer $\rep$.\\
            		Construct prioritized sampling distribution:\\ $q(\taub^g) \propto (1- p(\taub^g \mid \bs{\phi})) p(\taub^g)$ with higher $\ent_q (\Taub^g)$.\\
				Sample trajectories $\bs{\tau} \sim q(\taub^g \mid \bs{\phi})$ \\
            		Update policy ($\bs{\theta}$) to max.\ $\mathbb{E}_{q}\left[  r(S, G) \right]$ via DDPG, HER.
            	}
            	Update density model ($\bs{\phi}$).
        }
    } 
    \caption{Maximum Entropy-based Prioritization (MEP)}\label{algo:complete}
    \end{algorithm}
    \end{minipage}
    \caption{\textbf{MEP Algorithm}:
    We update the density model to construct a higher entropy distribution of achieved goals and update the agent with the more diversified training distribution. \label{fig:mep_model}}
\end{figure*}

Specifically, we use $p(\taub^g \mid z_{k})$ to denote the latent-variable-conditioned goal-state trajectory distribution, which we assume to be Gaussians. 
$z_{k}$ is the $k$-th latent variable, where $k \in \{ 1, ..., K\}$ and $K$ is the number of the latent variables. The resulting model is a Mixture of Gaussians(MoG), mathematically,
\begin{equation}
p(\taub^g \mid \bs{\phi})= \frac{1}{Z} \sum_{i=k}^K c_k \mathcal{N}(\taub^g | \boldsymbol{\mu}_k, \boldsymbol{\Sigma}_k),
\label{eq:lower-bound}
\end{equation}
where each Gaussian, $\mathcal{N}(\taub^g | \boldsymbol{\mu}_k, \boldsymbol{\Sigma}_k)$, has its own mean $\boldsymbol{\mu}_k$ and covariance $\boldsymbol{\Sigma}_k$, $c_k$ represents the mixing coefficients, and $Z$ is the partition function. The model parameter $\bs{\phi}$ includes all mean $\boldsymbol{\mu}_i$, covariance $\boldsymbol{\Sigma}_i$, and mixing coefficients $c_k$.

In prioritized sampling, we use the complementary predictive density of a goal-state trajectory $\taub^g$ as the priority, which is given as 
\begin{equation}
\bar{p}(\taub^g \mid \bs{\phi}) \propto 1- p(\taub^g \mid \bs{\phi}) .
\label{eq:density_bar}
\end{equation}
The complementary density describes the likelihood that a goal-state trajectory $\taub^g$ occurs in the replay buffer. 
A high complementary density corresponds to a rare occurrence of the goal trajectory.
We want to over-sample these rare goal-state trajectories during replay to increase the entropy of the training distribution.
Therefore, we use the complementary density to construct the proposal distribution as a joint distribution
\begin{equation}
\begin{split}
q(\taub^g) &\propto \bar{p}(\taub^g \mid \bs{\phi}) p(\taub^g) \\ 
				  &\propto (1- p(\taub^g \mid \bs{\phi})) p(\taub^g) \\
				  &\approx p(\taub^g) - p(\taub^g)^2.
\label{eq:contruct_proposal}
\end{split}
\end{equation}

\subsection{Maximum Entropy-Based Prioritization}
\label{sec:sub:maxent-pri}
With prioritized sampling, the agent learns to maximize the return of a more diverse goal distribution.
When the agent replays the samples, it first ranks all the trajectories with respect to their proposal distribution $q(\taub^g)$, and then uses the ranking number directly as the probability for sampling. 
This means that rare goals have high ranking numbers and, equivalently, have higher priorities to be replayed. 
Here, we use the ranking instead of the density.
The reason is that the rank-based variant is more robust since it is neither affected by outliers nor by density magnitudes. 
Furthermore, its heavy-tail property also guarantees that samples will be diverse \cite{schaul2015prioritized}. 
Mathematically, the probability of a trajectory to be replayed after the prioritization is:
\begin{equation}
q(\taub^g_{i}) = \frac{\mathrm{rank}(q(\taub^g_{i}))}{\sum_{n=1}^N \mathrm{rank}(q(\taub^g_{n}))},
\label{eq:rank}
\end{equation}
where $N$ is the total number of trajectories in the replay buffer and $\mathrm{rank}(\cdot)$ is the ranking function.

\begin{figure*}
	\centering
	\includegraphics[width=5.5 in]{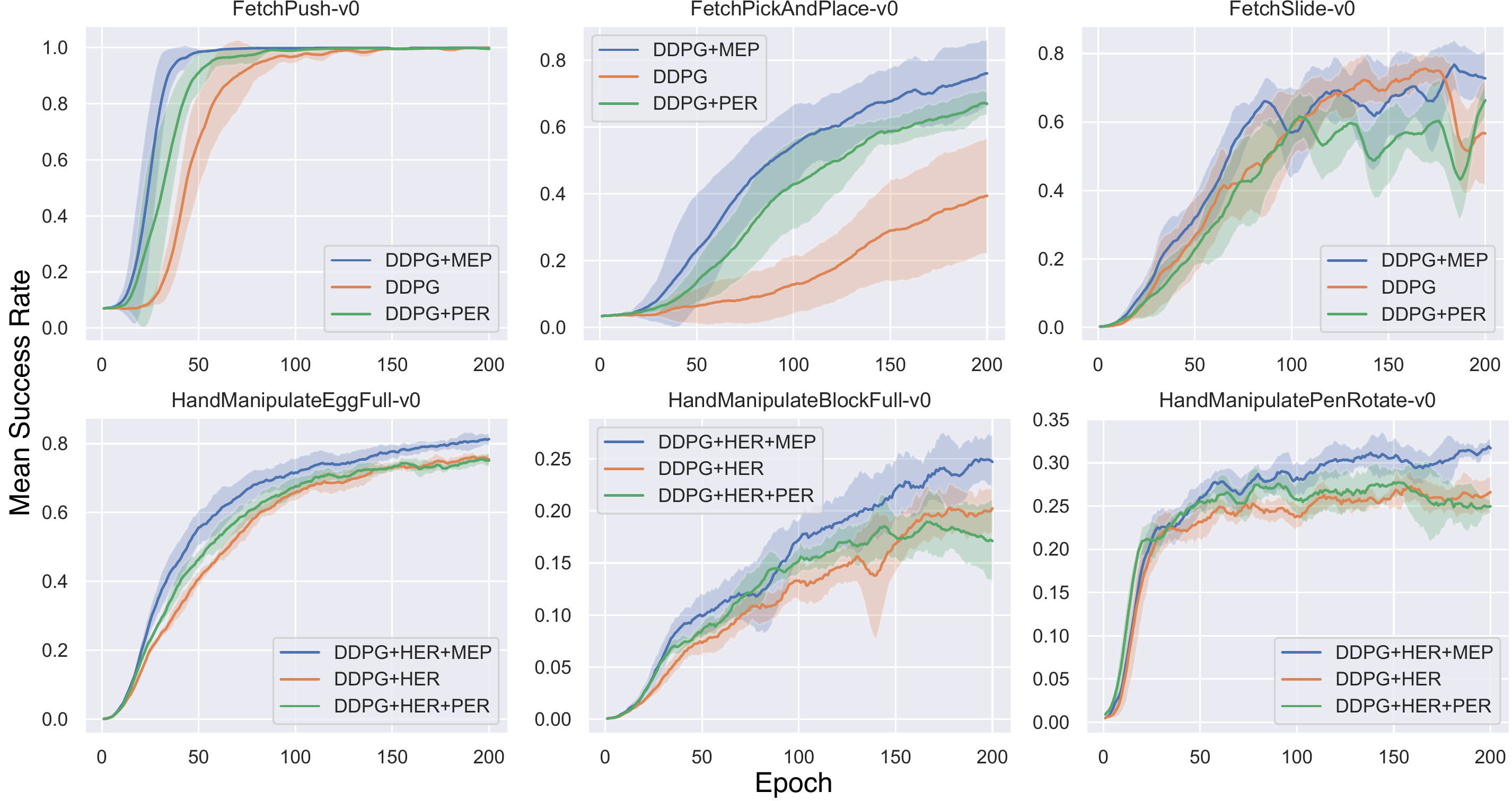}
	\caption{Mean success rate with standard deviation in all six robot environments}
	\label{fig:fig_accuracy}
\end{figure*}

We summarize the complete training algorithm in Algorithm~\ref{algo:complete} and in Figure~\ref{fig:mep_model}.
In short, we propose Maximum Entropy-Regularized Multi-Goal RL (Section~\ref{sec:sub:maxent-multi-goal-rl}) to enable RL agents to learn more efficiently in multi-goal tasks (Section~\ref{sec:sub:multi-goal-rl}).
We integrate a goal entropy term into the normal expected return objective.
To maximize the objective, Equation~(\ref{eq:entropy-objective-traj}), we derive a surrogate objective in Theorem~\ref{th:lower-bound}, i.e., a lower bound of the original objective. 
We use prioritized sampling based on a higher entropy proposal distribution at each iteration and utilize off-policy RL methods to maximize the expected return.
This framework is implemented as Maximum Entropy-based Prioritization (MEP). 

\begin{table*}[!ht]
\centering
\caption{Mean success rate (\%) and training time (hour) for all six environments}
\begin{tabular}{p{2.9cm} p{1.6cm}rp{1.6cm}r  p{1.6cm}rp{1.6cm}r p{1.6cm}rp{1.6cm}r}
\toprule
 							& \multicolumn{2}{c}{Push} 			& \multicolumn{2}{c}{Pick \& Place} 		& \multicolumn{2}{c}{Slide} 		\\
 							\cmidrule(lr){2-3} 			   	  			\cmidrule(lr){4-5} 								 	\cmidrule(lr){6-7}
Method     				& success   				& time  			& success   				& time 					& success   				& time 		\\
\midrule
DDPG       				& 99.90\% 				& 5.52h 		& 39.34\%  				& 5.61h  				& 75.67\% 				& 5.47h		\\
DDPG+PER 			& 99.94\% 				& 30.66h 		& 67.19\%  				& 25.73h  				& 66.33\% 				& 25.85h	\\
DDPG+MEP   			& \textbf{99.96\%}	& 6.76h 		& \textbf{76.02\%}  & 6.92h  				& \textbf{76.77\%} 	& 6.66h		\\
\midrule
\midrule
 							& \multicolumn{2}{c}{Egg} 			& \multicolumn{2}{c}{Block} 					& \multicolumn{2}{c}{Pen} 		\\
 							\cmidrule(lr){2-3} 			   	  			\cmidrule(lr){4-5} 								 	\cmidrule(lr){6-7}
Method     				& success   				& time  			& success   				& time 					& success   				& time 		\\
\midrule
DDPG+HER       		& 76.19\% 				& 7.33h 		& 20.32\%  				& 8.47h  				& 27.28\% 				& 7.55h		\\
DDPG+HER+PER 		& 75.46\% 				& 79.86h 		& 18.95\%  				& 80.72h  				& 27.74\% 				& 81.17h	\\
DDPG+HER+MEP   	& \textbf{81.30\%}	& 17.00h 		& \textbf{25.00\%}  & 19.88h  				& \textbf{31.88\%} 	& 25.36h		\\
\bottomrule
\end{tabular}
\label{tab:results}
\end{table*}


\section{Experiments}
\label{sec:experiments}

We test the proposed method on a variety of simulated robotic tasks, see Section~\ref{sec:environments}, and compare it to strong baselines, including DDPG and HER.
To the best of our knowledge, the most similar method to MEP is Prioritized Experience Replay (PER) \cite{schaul2015prioritized}. 
In the experiments, we first compare the performance improvement of MEP and PER. 
Afterwards, we compare the time-complexity of the two methods. 
We show that MEP improves performance with much less computational time than PER.
Furthermore, the motivations of PER and MEP are different. The former uses TD-errors, while the latter is based on an entropy-regularized objective function.

In this section, we investigate the following questions: 
\begin{enumerate}
\item Does incorporating goal entropy via MEP bring benefits to off-policy RL algorithms, such as DDPG or DDPG+HER?
\item Does MEP improve sample-efficiency of state-of-the-art RL approaches in robotic manipulation tasks?
\item How does MEP influence the entropy of the achieved goal distribution during training?
\end{enumerate}
Our code is available online at \url{https://github.com/ruizhaogit/mep.git}.
The implementation uses OpenAI Baselines \cite{baselines} with a backend of TensorFlow \cite{abadi2016tensorflow}.

\subsection{Performance}
To test the performance difference among methods including DDPG, DDPG+PER, and DDPG+MEP, we run the experiment in the three robot arm environments.
We use the DDPG as the baseline here because the robot arm environment is relatively simple. 
In the more challenging robot hand environments, we use DDPG+HER as the baseline method and test the performance among DDPG+HER, DDPG+HER+PER, and DDPG+HER+MEP.
To combine PER with HER, we calculate the TD-error of each transition based on the randomly selected achieved goals. 
Then we prioritize the transitions with higher TD-errors for replay. 

Now, we compare the mean success rates. 
Each experiment is carried out with 5 random seeds and the shaded area represents the standard deviation. 
The learning curve with respect to training epochs is shown in Figure \ref{fig:fig_accuracy}. 
For all experiments, we use 19 CPUs and train the agent for 200 epochs. 
After training, we use the best-learned policy for evaluation and test it in the environment. 
The testing results are the mean success rates. 
A comparison of the performances along with the training time is shown in Table \ref{tab:results}.

\begin{figure*}[!ht]
	\centering
	\includegraphics[width=5.5 in]{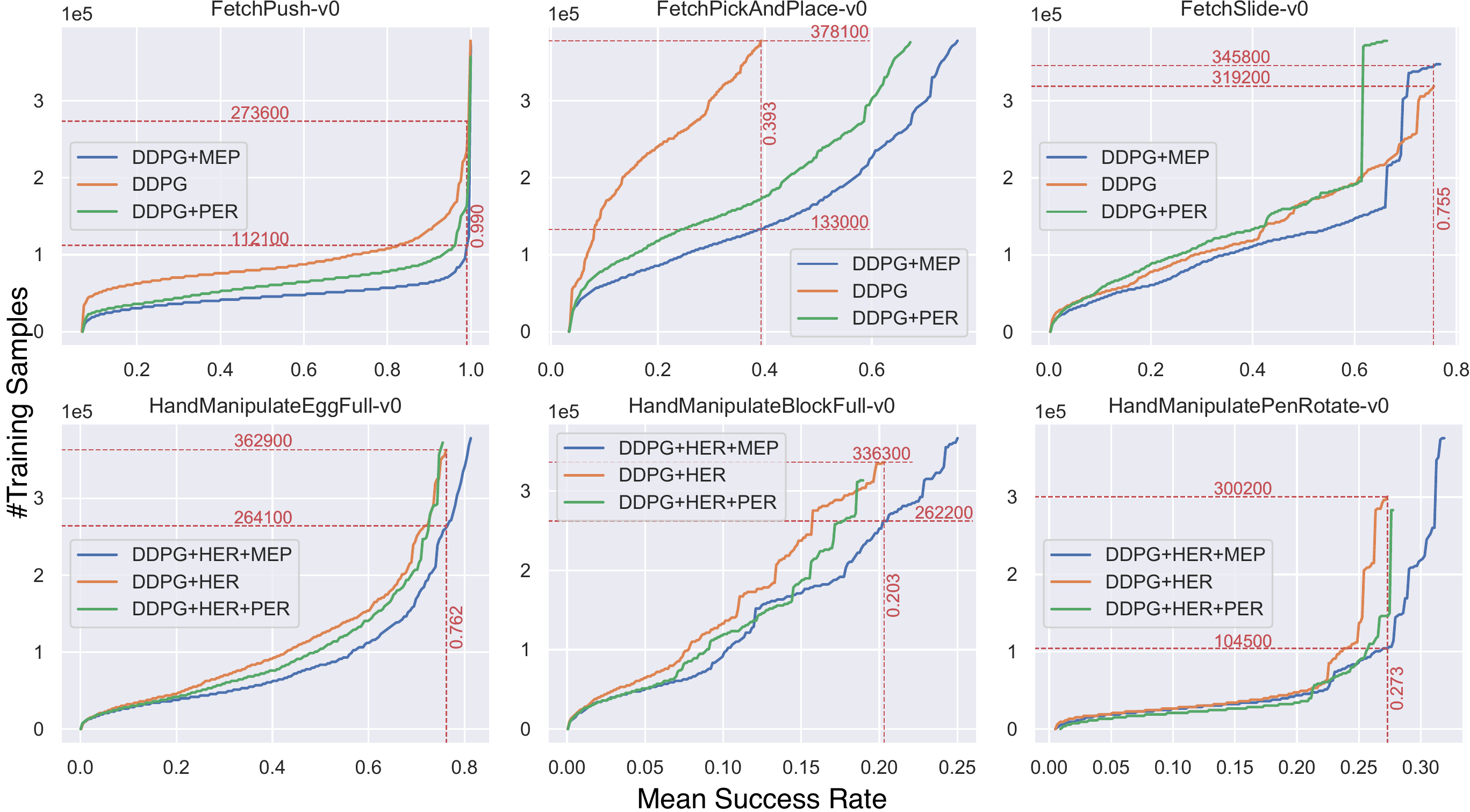}
	\caption{Number of training samples needed with respect to mean success rate for all six environments (the lower the better)}
	\label{fig:eff6plot}
\end{figure*}

From Figure \ref{fig:fig_accuracy}, we can see that MEP converges faster in all six tasks than both the baseline and PER. 
The agent trained with MEP also shows a better performance at the end of the training, as shown in Table \ref{tab:results}.
In Table \ref{tab:results}, we can also see that the training time of MEP lies in between the baseline and PER.
It is known that PER can become very time-consuming \cite{schaul2015prioritized}, especially when the memory size $N$ is very large.
The reason is that PER uses TD-errors for prioritization. After each update of the model, the agent needs to update the priorities of the transitions in the replay buffer, which is ${O}(\log{}N)$.  
In our experiments, we use the efficient implementation based on the ``sum-tree" data structure, which can be relatively efficiently updated and sampled from \cite{schaul2015prioritized}. 
To be more specific, MEP consumes much less computational time than PER. 
For example in the robot arm environments, on average, DDPG+MEP consumes about 1.2 times the training time of DDPG.
In comparison, DDPG+PER consumes about 5 times the training time as DDPG.
In this case, MEP is 4 times faster than PER.
MEP is faster because it only updates the trajectory density once per epoch and can easily be combined with any multi-goal RL methods, such as DDPG and HER.

Table \ref{tab:results} shows that baseline methods with MEP result in better performance in all six tasks. 
The improvement increases by up to 39.34 percentage points compared to the baseline methods. 
The average improvement over the six tasks is 9.15 percentage points. 
We can see that MEP is a simple, yet effective method and it improves state-of-the-art methods.

\subsection{Sample-Efficiency}
To compare sample-efficiency of the baseline and MEP, we compare the number of training samples needed for a certain mean success rate. 
The comparison is shown in Figure \ref{fig:eff6plot}.
From Figure \ref{fig:eff6plot}, in the \texttt{FetchPush-v0} environment, we can see that for the same 99\% mean success rate, 
the baseline DDPG needs 273,600 samples for training, while DDPG+MEP only needs 112,100 samples. 
In this case, DDPG+MEP is more than twice (2.44) as sample-efficient as DDPG. 
Similarly, in the other five environments, MEP improves sample-efficiency by factors around one to three. 
In conclusion, for all six environments, MEP is able to improve sample-efficiency by an average factor of two (1.95) over the baseline's sample-efficiency.

\subsection{Goal Entropy}
\begin{figure*}[!ht]
	\centering
	\includegraphics[width=5.5 in]{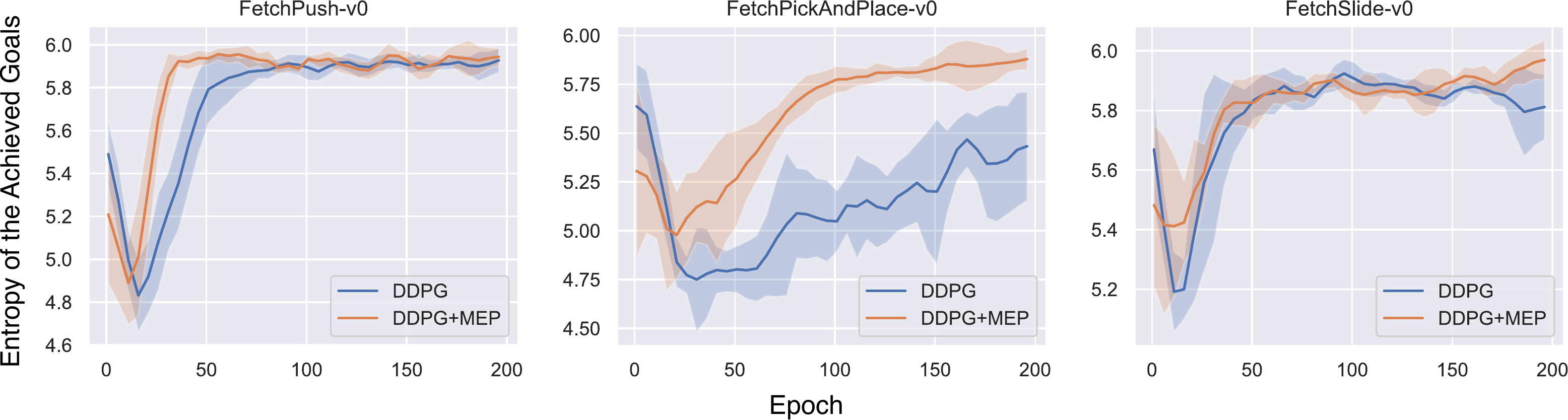}
	\caption{Entropy values of the achieved goal distribution $\ent_p (\Taub^g)$ during training}
	\label{fig:goal-entropy}
\end{figure*} 
To verify that the overall MEP procedure works as expected, we calculated the entropy value of the achieved goal distribution $\ent_p (\Taub^g)$ with respect to the epoch of  training. 
The experimental results are averaged over 5 different random seeds. 
Figure~\ref{fig:goal-entropy} shows the mean entropy values with its standard deviation in three different environments.
From Figure~\ref{fig:goal-entropy}, we can see that the implemented MEP algorithm indeed increases the entropy of the goal distribution. 
This affirms the consistency of the stated theory with the implemented MEP framework.


\section{Related Work}
\label{sec:related_work}
Maximum entropy was used in RL by \citet{williams1991function} as an additional term in the loss function to encourage exploration and avoid local minimums \cite{mnih2016asynchronous,wu2016training,nachum2016improving,asadi2016alternative}. A similar idea has also been utilized in the deep learning community, where entropy loss was used as a regularization technique to penalize over-confident output distributions \cite{pereyra2017regularizing}.  In RL, the entropy loss adds more cost to actions that dominate quickly. A higher entropy loss favors more exploration \cite{mnih2016asynchronous}. \citet{neu2017unified} gave a unified view on entropy-regularized Markov Decision Processes (MDP) and discussed the convergence properties of entropy-regularized RL, including TRPO \cite{schulman2015trust} and A3C \cite{mnih2016asynchronous}. 

More recently, \citet{haarnoja2017reinforcement} and \citet{levine2018reinforcement} proposed deep energy-based policies with state conditioned entropy-based regularization, which is known as Soft-Q Learning. They showed that maximum entropy policies emerge as the solution when optimal control is cast as probabilistic inference.
Concurrently, \citet{schulman2017equivalence} showed the connection and the equivalence between Soft-Q Learning and policy gradients. 
Maximum entropy policies are shown to be robust and lead to better initializations for RL agents \cite{haarnoja2018latent,haarnoja2018composable}.
Based on maximum entropy polices, \citet{eysenbach2018diversity} developed an information theoretic objective, which enables the agent to automatically discover different sets of skills. 

Unlike aforementioned works \cite{williams1991function,mnih2016asynchronous,haarnoja2017reinforcement}, the information theoretic objective \cite{eysenbach2018diversity} uses state, not actions, to calculate the entropy for distinguishing different skills. Our work is similar to this previous work \cite{eysenbach2018diversity} in the sense that we also use the states, instead of actions, to calculate the entropy term and encourage the trained agent to cover a variety of goal-states. Our method generalizes to multi-goal and multi-task RL \cite{kaelbling1993hierarchical,sutton1999between,bakker2004hierarchical,sutton2011horde,
szepesvari2014universal,schaul2015universal, pinto2017learning,plappert2018multi}. 

The entropy term that we used in the multi-goal RL objective is maximized over goal-states. 
We use maximum goal entropy as a regularization for multi-goal RL, which encourages the agent to learn uniformly with respect to goals instead of experienced transitions. This corrects the bias introduced by the agent's behavior policies. For example, the more easily achievable goals are generally dominant in the replay buffer. The goal entropy-regularized objective allows the agent to learn to achieve the unknown real goals, as well as various virtual goals.

We implemented the maximum entropy regularization via prioritized sampling based on achieved goal-states. 
We believe that the most similar framework is prioritized experience replay \cite{schaul2015prioritized}.
Prioritized experience replay was introduced by \citet{schaul2015prioritized} as an improvement to the experience replay in DQN \cite{mnih2015human}.
It prioritizes the transitions with higher TD-error in the replay buffer to speed up training.
The prioritized experience replay is motivated by TD-errors.
However, the motivation of our method comes from information theory--maximum entropy.
Compared to prioritized experience replay, our method performs superior empirically and consumes much less computational time.

The intuition behind our method is to assign priority to those under-represented goals, which are relatively more valuable to learn from (see Appendix).
Essentially, our method samples goals from an entropy-regularized distribution, rather than from a true replay buffer distribution, which is biased towards the behavior policies.
Similar to recent work on goal sampling methods \cite{forestier2017intrinsically,pere2018unsupervised,florensa2018automatic,zhao2018energy,nair2018visual,warde2018unsupervised}, our aim is to model a goal-conditioned MDP.
In the future, we want to further explore the role of goal entropy in multi-goal RL.
      

\section{Conclusion}
This paper makes three contributions. First, we propose the idea of Maximum Entropy-Regularized Multi-Goal RL, which is essentially a reward-weighted entropy objective.
Secondly, we derive a safe surrogate objective, i.e., a lower bound of the original objective, to achieve stable optimization. 
Thirdly, we implement a novel Maximum Entropy-based Prioritization framework for optimizing the surrogate objective.
Overall, our approach encourages the agent to achieve a diverse set of goals while maximizing the expected return.

We evaluated our approach in multi-goal robotic simulations.
The experimental results showed that our approach improves performance and sample-efficiency of the agent while keeping computational time under control.
More precisely, the results showed that our method improves performance by 9 percentage points and sample-efficiency by a factor of two compared to state-of-the-art methods.


\bibliography{reference}
\bibliographystyle{icml2019}

\onecolumn
\appendix

\section{Proof of Theorem 1}

\begin{theorem}
	\label{th:lower-bound}
	The surrogate $\eta^{\lb}(\bs{\theta})$ is a lower bound of the objective function $\eta^{\ent}(\bs{\theta})$, i.e.,  $\eta^{\lb}(\bs{\theta}) < \eta^{\ent}(\bs{\theta})$, where

	\begin{equation}
	\eta^{\ent}(\bs{\theta}) =\ent^w_p(\Taub^g) = \mathbb{E}_p\left[\log\frac{1}{p(\taub^g)} \sum_{t = 1}^T r(S_t, G^e) \mid \bs{\theta}\right] 
	\end{equation} 
	
	\begin{equation}
	\eta^{\lb}(\bs{\theta}) = Z \cdot \mathbb{E}_q \left[\sum_{t = 1}^T r(S_t, G^e) \mid \bs{\theta} \right]
	\label{eq:th1-lb}
	\end{equation}
	
	\begin{equation}
	q(\taub^g) = \frac{1}{Z}p(\taub^g)\left(1-p(\taub^g)\right)
	\end{equation}
	
	$Z$ is the normalization factor for $q(\taub^g)$.
	$\ent^w_p(\Taub^g)$ is the weighted entropy \cite{guiacsu1971weighted, Kelbert2017}, where the weight is the accumulated reward $\sum_{t = 1}^T r(S_t, G^e)$ in our case. 
	
\end{theorem}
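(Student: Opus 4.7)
The plan is to unfold both $\eta^{\ent}$ and $\eta^{\lb}$ as explicit sums over goal-state trajectories and then reduce the comparison to a single pointwise inequality. First I would substitute the definition of $q$ into $\eta^{\lb}$, cancel the normalizer $Z$, and obtain
\[
\eta^{\lb}(\bs\theta) = \sum_{\taub^g} p(\taub^g)\bigl(1-p(\taub^g)\bigr)\, R(\taub^g),
\]
where $R(\taub^g) := \sum_{t=1}^{T} r(s_t, g^e)$ denotes the accumulated reward along the trajectory. In parallel,
\[
\eta^{\ent}(\bs\theta) = \sum_{\taub^g} p(\taub^g)\, \log\frac{1}{p(\taub^g)}\, R(\taub^g).
\]
So a termwise comparison of $\eta^{\ent}$ against $\eta^{\lb}$ reduces to comparing the factors $\log(1/p(\taub^g))$ and $1 - p(\taub^g)$ under the common nonnegative weight $p(\taub^g) R(\taub^g)$.

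The second step is to invoke the elementary tangent-line inequality $\log x \le x - 1$, valid for all $x > 0$ with equality only at $x = 1$. Setting $x = p(\taub^g)$ and rearranging yields
\[
\log\frac{1}{p(\taub^g)} > 1 - p(\taub^g) \quad \text{whenever } p(\taub^g) \in (0,1).
\]
The excerpt already takes $p(\taub^g) > 0$, and the degenerate case $p(\taub^g) = 1$ (all mass on a single trajectory) is excluded by the assumption of a non-trivial replay distribution. Multiplying this strict pointwise inequality by $p(\taub^g) R(\taub^g) \geq 0$ and summing over $\taub^g$ then gives $\eta^{\ent}(\bs\theta) > \eta^{\lb}(\bs\theta)$, which is exactly the claim.

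The main obstacle is the sign of $R(\taub^g)$: the inequality above is preserved under integration only when the weight is nonnegative. In the paper's sparse-reward convention $r \in \{-1,0\}$, so one must either apply an affine shift that makes $R$ nonnegative before invoking the bound, or factor out the sign of $R$ and reverse the tangent-line inequality accordingly. I would state the nonnegativity assumption (or the reward shift) explicitly at the start of the proof; with that convention fixed, the remaining manipulations are a routine combination of the two steps outlined above.
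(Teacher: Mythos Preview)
Your proposal is correct and follows exactly the paper's route: expand $\eta^{\lb}$ via the definition of $q$, cancel $Z$, and compare termwise using the inequality $\log x < x-1$ (equivalently $1-p < -\log p$). In fact your discussion of the sign of $R(\taub^g)$ is more careful than the paper itself, which silently assumes the accumulated reward is nonnegative when passing from $p(1-p)R$ to $-p\log p\cdot R$; your suggestion to shift rewards or state the assumption explicitly is the right way to close that gap.
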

\begin{proof}
	\begin{align} 
	\eta^{\lb}(\bs{\theta}) &=	
	Z \cdot \mathbb{E}_q \left[\sum_{t = 1}^T r(S_t, G^e) \mid \bs{\theta} \right] \\
	 &=\sum_{\taub^g} Z \cdot q({\taub}^g) \sum_{t = 1}^T r(s_t, g^e) \\
	&=\sum_{\taub^g} \frac{Z}{Z} p(\taub^g) (1-p(\taub^g)) \sum_{t = 1}^T r(s_t, g^e)\\
	&< \sum_{\taub^g} -p(\taub^g) \log p(\taub^g) \sum_{t = 1}^T r(s_t, g^e)\\
	&=\mathbb{E}_p\left[\log\frac{1}{p(\taub^g)} \sum_{t = 1}^T r(S_t, G^e) \mid \bs{\theta}\right] \\
	&= \ent^w_p(\Taub^g)\\
    &= \eta^{\ent}(\bs{\theta})
	\end{align} 
In the inequality, we use the property $\log x < x-1$.
\end{proof}

\section{Proof of Theorem 2}

\begin{theorem}
\label{th:higher-entropy}
Let the probability density function of goals in the replay buffer be 
\begin{equation}
p(\taub^g),\text{where}\ p(\taub^g_i) \in (0,1)\ \text{and}\ \sum_{i=1}^{N} p(\taub^g_i)=1.
\end{equation}
Let the proposal probability density function be defined as 
\begin{equation}
q(\taub^g_i) = \frac{1}{Z} p(\taub^g_i) \left( 1 - p(\taub^g_i) \right ) ,\ \text{where}\ \sum_{i=1}^N q(\taub^g_i)=1.
\end{equation}
Then, the proposal goal distribution has an equal or higher entropy 
\begin{equation}
\ent_q(\Taub^g) - \ent_p (\Taub^g) \geq 0.
\end{equation}
\end{theorem}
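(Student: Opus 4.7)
My plan is to prove $\ent_q(\Taub^g) - \ent_p(\Taub^g) \geq 0$ by a direct algebraic expansion followed by a rearrangement argument. First I would substitute $q_i = p_i(1-p_i)/Z$ into $\ent_q = -\sum_i q_i \log q_i$, expand $\log q_i = \log p_i + \log(1-p_i) - \log Z$, and regroup terms to obtain
\[
\ent_q - \ent_p \;=\; \sum_i (p_i - q_i)\log p_i \;+\; \Bigl(\log Z - \sum_i q_i \log(1-p_i)\Bigr).
\]

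For the first summand I would use $Z = 1 - M_2$, where $M_2 = \sum_j p_j^2$, to write $p_i - q_i = p_i(p_i - M_2)/Z$. The resulting quantity $\tfrac{1}{Z}\sum_i p_i(p_i - M_2)\log p_i$ is non-negative by the weighted Chebyshev sum inequality: ordering the indices so that $p_i$ is nondecreasing makes both $(p_i - M_2)$ and $(\log p_i)$ nondecreasing in $i$, and since $\sum_i p_i(p_i - M_2) = M_2 - M_2 = 0$, the weighted Chebyshev bound with weights $p_i$ collapses the right-hand side to zero.

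The main obstacle is the second bracket: a short calculation shows that $\log Z - \sum_i q_i \log(1-p_i) = -\kl{q}{p}$, which is \emph{non-positive}, so the term-by-term estimate alone is wasteful. I expect this to be the crux of the proof. To close the gap, I would appeal to Schur-concavity of the Shannon entropy and argue that $q$ is majorized by $p$. The intuition is that the map $p_i \mapsto q_i$ is a Robin-Hood-type transfer: the sign of $q_i - p_i = p_i(M_2 - p_i)/Z$ is positive exactly when $p_i < M_2$ and negative when $p_i > M_2$, so probability mass flows systematically from high-probability outcomes to low-probability ones. Making this precise---i.e.\ verifying the partial-sum inequalities in the definition of majorization when the decreasing orderings of $p$ and $q$ do not coincide (which can happen whenever some $p_i$ exceed $1/2$, since $p_i(1-p_i)$ is non-monotone on $[0,1]$)---is the most delicate step; one clean way to do it is to build $q$ from $p$ via an explicit finite sequence of pairwise $T$-transforms, each of which is easily seen to raise the entropy.
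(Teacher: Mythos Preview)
Your eventual strategy---show that $p$ majorizes $q$ and invoke Schur-concavity of Shannon entropy---is exactly the route the paper takes (the paper phrases it as Karamata's inequality applied to the concave summands $x\mapsto -x\log x$). The initial algebraic detour and the identification of the second bracket as $-\kl{q}{p}$ are correct and pleasant, but as you yourself note they do not close the argument, so they are not needed.

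Where your plan diverges from the paper is in the ``most delicate step.'' You anticipate trouble because $x\mapsto x(1-x)$ is non-monotone on $[0,1]$ and therefore worry that the decreasing orderings of $p$ and $q$ may differ when some $p_i>\tfrac12$; you then propose to repair this with an explicit chain of $T$-transforms. In fact the difficulty never arises. The paper's key observation is the factorization
\[
q_i - q_j \;=\; \frac{1}{Z}\bigl(p_i - p_j\bigr)\bigl(1 - p_i - p_j\bigr),
\]
and since the $p_k$ are positive and sum to $1$, every pair satisfies $p_i + p_j \le 1$. Hence $q_i - q_j$ always has the same sign as $p_i - p_j$, so the decreasing order of $q$ coincides with that of $p$ even when some $p_i$ exceeds $\tfrac12$. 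With the orderings aligned, the partial-sum inequalities $\sum_{i\le k} p_i \ge \sum_{i\le k} q_i$ can be checked directly (the paper reduces them, after multiplying through by $Z$ and expanding, to the elementary term-wise bound $p_ip_j^2 \le p_i^2 p_j$ for $i\le k<j$). So the $T$-transform construction you propose would work but is heavier than necessary; the constraint $\sum_i p_i = 1$ is precisely what makes the naive majorization check go through.
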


\begin{proof}

For clarity, we define the notations in this proof as $p_i = p(\taub^g_i)$ and $q_i = q(\taub^g_i)$.

Note that the definition of Entropy is 
\begin{equation}
\ent_p = \sum_i -p_i\log(p_i),
\end{equation}
where the $i$th summand is $p_i\log(p_i)$, which is a concave function. 
Since the goal distribution has a finite support $I$, we have the real-valued vector $(p_1, \hdots, p_N)$ and $(\frac{1}{Z}q_1, \hdots, \frac{1}{Z}q_N)$. 

We use Karamata's inequality \cite{kadelburg2005inequalities}, which states that if the vector $(p_1, \hdots, p_N)$ majorizes $(\frac{1}{Z}q_1, \hdots, \frac{1}{Z}q_N)$ then the summation of the concave transformation of the first vector is smaller than the concave transformation of the  second vector. 

In our case, the concave transformation is the weighted information at the $i$th position -$p_i\log(p_i)$, where the weight is the probability $p_i$ (entropy is the expectation of information).
Therefore, the proof of the theorem is also a proof of the majorizing property of $p$ over $q$ \cite{entrostack}. 

We denote the proposal goal distribution as 
\begin{equation}
q_i = f(p_i) = \frac{1}{Z}p_i(1-p_i).
\end{equation}
Note that in our case, the partition function $Z$ is a constant. 

Majorizing has three requirements \cite{marshall1979inequalities}.

The first requirement is that both vectors must sum up to one.
This requirement is already met because 
\begin{equation}
\sum_i p_i = \sum_iq_i = 1.
\end{equation}

The second requirement is that monotonicity exits.
Without loss of generality, we assume the probabilities are sorted: 
\begin{equation}
p_1 \geq p_2 \geq \hdots \geq p_N
\end{equation}
Thus, if $i > j$ then 
\begin{align}
f(p_i) - f(p_j) &= \frac{1}{Z}p_i(1-p_i) - \frac{1}{Z}p_j(1-p_j) \\
					&=  \frac{1}{Z}[(p_i - p_j) - (p_i + p_j)(p_i - p_j)] \\
					&= \frac{1}{Z}(p_i - p_j) (1-p_i-p_j) \\
					&\geq 0.
\end{align}
which means that if the original goal probabilities are sorted, the transformed goal probabilities are also sorted,
\begin{equation}
f(p_1) \geq f(p_2) \geq \hdots \geq f(p_N).
\end{equation}

The third requirement is that for an arbitrary cutoff index $k$, there is
\begin{equation}
p_1 + \hdots p_k < q_1 + \hdots + q_k.
\end{equation}
To prove this, we have
\begin{align}
p_1 + \hdots + p_k  & = \frac{p_1 + \hdots + p_k }{1} \\
							   &= \frac{p_1 + \hdots + p_k }{p_1 + \hdots + p_N} \\
							   &\geq f(p_1) + ... + f(p_k) \\
							   &= \frac{1}{Z}[p_1(1-p_1) + ... + p_k(1-p_k)] \\
							   &= \frac{1}{Z}[p_1 + \hdots + p_k - (p_1^2 + \hdots + p_k^2)]
\end{align}

Note that, we multiply $Z*1$ to each side of
\begin{equation}
Z = p_1(1-p_1) + \hdots + p_N(1-p_N).
\end{equation}
Then we have
\begin{equation}
(p_1 + \hdots  + p_k)Z * 1 \geq p_1 + \hdots + p_k - (p_1^2 + \hdots + p_k^2) * 1.
\end{equation}

Now, we substitute the expression of $Z$ and then have
\begin{equation}
(p_1 + \hdots  + p_k)[p_1(1-p_1) + \hdots + p_N(1-p_N)] \geq [p_1 + \hdots + p_k - (p_1^2 + \hdots + p_k^2)] * 1.
\end{equation}

We express $1$ as a series of terms $\sum_i p_i$, we have
\begin{equation}
(p_1 + \hdots  + p_k)[p_1(1-p_1) + \hdots + p_N(1-p_N)] \geq [p_1 + \hdots + p_k - (p_1^2 + \hdots + p_k^2)] * [(p_1 +\hdots + p_k ) + (p_{k+1} + \hdots p_{N})].
\end{equation}

We use the distributive law to the right side and have
\begin{align*}
&(p_1 + \hdots  + p_k)[p_1(1-p_1) + \hdots + p_N(1-p_N)] \\
\geq &[p_1 + \hdots + p_k ] * [(p_1 +\hdots + p_k ) + (p_{k+1} + \hdots p_{N})] - [(p_1^2 + \hdots + p_k^2)]* [(p_1 +\hdots + p_k ) + (p_{k+1} + \hdots p_{N})].
\numberthis
\end{align*}

We move the first term on the right side to the left and use the distributive law then have
\begin{equation}
(p_1 + \hdots  + p_k)[-1*(p_1^2 + \hdots + p_N^2))] \geq  - [(p_1^2 + \hdots + p_k^2)]* [(p_1 +\hdots + p_k ) + (p_{k+1} + \hdots p_{N})].
\end{equation}

We use the distributive law again on the right side and move the first term to the left and use the distributive law then have
\begin{equation}
(p_1 + \hdots  + p_k)[-1*(p_{k+1}^2 + \hdots + p_N^2))] \geq  - [(p_1^2 + \hdots + p_k^2)]* [ (p_{k+1} + \hdots p_{N})].
\end{equation}

We remove the minus sign then have
\begin{equation}
(p_1 + \hdots  + p_k)[(p_{k+1}^2 + \hdots + p_N^2))] \leq   [(p_1^2 + \hdots + p_k^2)]* [ (p_{k+1} + \hdots p_{N})].
\end{equation}

To prove the inequality above, it suffices to show that the inequality holds true for each associated term of the multiplication on each side of the inequality.

Suppose that
\begin{equation}
i \leq k < j
\end{equation}
then we have
\begin{equation}
p_i > p_j.
\end{equation}
As mentioned above, the probabilities are sorted in descending order. We have 
\begin{equation}
p_ip_j^2- p_i^2p_j = p_ip_j(p_j - p_i) <0
\end{equation}
then
\begin{equation}
p_ip_j^2 < p_i^2p_j.
\end{equation}

Therefore, we have proved that the inequality holds true for an arbitrary associated term, which also applies when they are added up.
\end{proof}

\section{Insights}

\begin{figure*}[!ht]
	\centering
	\includegraphics[width=5. in]{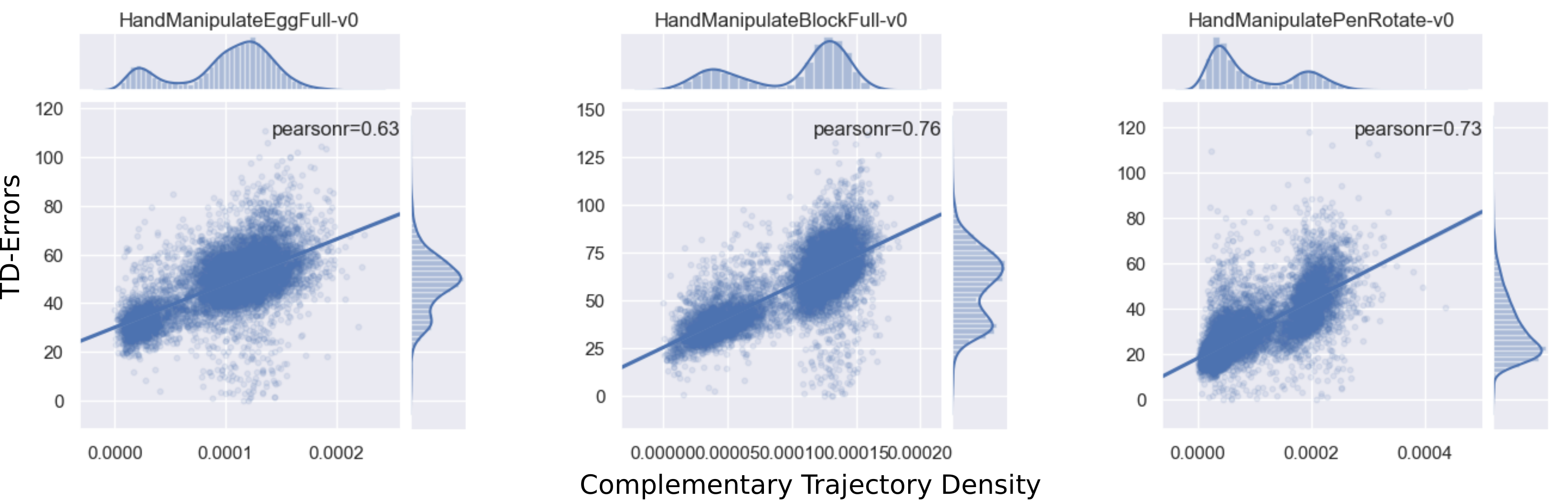}
	\caption{Pearson correlation between the complementary density $\bar{p}(\taub^g) $ and TD-errors in the middle of training}
	\label{fig:pearson3plot}
\end{figure*} 
To further understand why maximum entropy in goal space facilitates learning, we look into the TD-errors during training.
We investigate the correlation between the complementary predictive density $\bar{p}(\taub^g \mid \bs{\phi})$ and the TD-errors of the trajectory.
The Pearson correlation coefficients, i.e., Pearson's r \cite{benesty2009pearson}, between the density $\bar{p}(\taub^g \mid \bs{\phi})$ and the TD-errors of the trajectory are 0.63, 0.76, and 0.73, for the hand manipulation of egg, block, and pen tasks, respectively.
The plot of the Pearson correlation is shown in Figure~\ref{fig:pearson3plot}.
The value of Pearson's r is between 1 and -1, where 1 is total positive linear correlation, 0 is no linear correlation, and -1 is total negative linear correlation.
We can see that the complementary predictive density is correlated with the TD-errors of the trajectory with an average Pearson's r of 0.7.
This proves that the agent learns faster from a more diverse goal distribution.
Under-represented goals often have higher TD-errors, and thus are relatively more valuable to learn from.
Therefore, it is helpful to maximize the goal entropy and prioritize the under-represented goals during training. 

\end{document}